\documentclass{article}

\usepackage[preprint]{neurips_2026}

\usepackage[utf8]{inputenc}
\usepackage[T1]{fontenc}
\usepackage{hyperref}
\usepackage{url}
\usepackage{booktabs}
\usepackage{amsfonts}
\usepackage{amsmath}
\usepackage{amssymb}
\usepackage{amsthm}
\usepackage{nicefrac}
\usepackage{microtype}
\usepackage{xcolor}
\usepackage{graphicx}
\usepackage{subcaption}
\usepackage{multirow}
\usepackage{listings}
\usepackage{algorithm}
\usepackage{algorithmic}
\usepackage{wrapfig}

\usepackage{graphicx,amsmath,amsfonts,verbatim}
\usepackage{subcaption}
\usepackage{amssymb}
\usepackage{amsthm}
\usepackage{multirow}
\usepackage{booktabs}
\newtheorem{assumption}{Assumption}
\newtheorem{theorem}{Theorem}

\newcommand{\reals}{{\bf R{}}}  
\newcommand{\prob}{\mathop{\bf P{}}}  
\newcommand{\expect}{\mathop{\bf E{}}}  
\newcommand{\argmin}{\mathop{\rm argmin}}  

\newcommand{\Dkl}{\mathop{D_{\rm kl}}}  

\usepackage{lipsum,authblk}

\graphicspath{{imgs/}}

\title{Probabilistic Recurrent Intention Switching Model}
\author[1,*]{Wenyuan Sheng}
\author[1,2,*]{Hao Zhu}
\author[1,2]{Joschka Boedecker}
\affil[1]{Department of Computer Science, University of Freiburg}
\affil[2]{IMBIT//BrainLinks-BrainTools}
\affil[*]{These authors contributed equally.}

\begin{document}
\maketitle

\begin{abstract}
Inverse reinforcement learning (IRL) recovers reward functions from observed behavior, yet traditional methods assume a single stationary reward that cannot capture goal switching within an episode. Recent multi-intention IRL methods address this by segmenting trajectories, but model intention transitions as either a memoryless Markov chain or via manual state augmentation with a fixed history window. We propose the Probabilistic Recurrent Intention Switching Model (PRISM), which replaces both mechanisms with a lightweight recurrent network that maps observation history to a per-step intention distribution. We prove that the resulting EM objective decomposes exactly into independent per-intention reward subproblems, each solvable in closed form, yielding an $\mathcal{O}(nK)$ E-step with no variational approximation. We evaluate PRISM on a non-Markovian gridworld, a mouse labyrinth, and BridgeData~V2 robotic manipulation, the first large-scale robotic application of multi-intention IRL. Across all settings PRISM achieves the highest held-out log-likelihood while recovering nameable, temporally coherent intentions from unlabeled demonstrations, suggesting that discrete goal switching is present in both biological and artificial agents.
\end{abstract}

\section{Introduction}

Demonstrations from any goal-directed agent, whether biological or artificial, contain latent goal switches. A mouse navigating a labyrinth alternates between seeking water and returning home; a teleoperated robot arm switches between reaching, grasping, and placing within a single demonstration; even human drivers shift between lane-keeping, overtaking, and parking. Recovering the reward functions that drive each goal, the moments at which the agent switches between them, and the aspects of the history that trigger these switches is essential for understanding complex sequential behavior. Yet standard inverse reinforcement learning (IRL) assumes a single stationary reward, conflating multiple objectives into one function that cannot distinguish between them.

Three lines of work have attempted to lift this stationarity assumption. Dynamic IRL (DIRL)~\citep{ashwood2022dynamic} models the reward as a smoothly time-varying combination of spatial goal maps, but its continuity assumption conflicts with evidence that animals switch between \emph{discrete} strategies~\citep{ashwood2022mice}, and its inference requires $T$ separate Bellman solves, one per timestep. Hierarchical Inverse Q-Learning (HIQL)~\citep{zhu2024multiintention} replaces smooth variation with a first-order Markov chain over intentions, but the memoryless transition model cannot capture switching driven by cumulative experience such as fatigue or satiation, and its E-step requires the Baum--Welch forward--backward algorithm at $\mathcal{O}(nK^2)$ cost per trajectory. SWIRL~\citep{ke2025inverse} adds state-dependent transition kernels and history-augmented rewards, but represents history via state augmentation: with history length $L$ on $|\mathcal{S}|$ states the augmented state space grows as $|\mathcal{S}|^L$ (e.g., $L{=}2$ on 127 states yields $127^2 {=} 16{,}129$ augmented states), limiting $L$ to small values in practice. None of these approaches scale naturally beyond small tabular environments.

We propose the \emph{Probabilistic Recurrent Intention Switching Model} (PRISM), a framework that absorbs all temporal complexity into a single lightweight recurrent neural network. At each timestep the network reads an observation from the environment, updates a hidden state that summarizes the trajectory so far, and outputs a soft assignment over a finite set of intentions. This assignment is combined with per-intention Boltzmann policy likelihoods to form a per-step posterior responsibility over intentions. We prove that the resulting expectation-maximization (EM) objective decomposes exactly into independent per-intention reward subproblems, each solvable in closed form via inverse action-value iteration (IAVI)~\citep{NEURIPS2020_a4c42bfd}, requiring no variational approximation. The posterior factorizes into independent per-step terms, yielding an $\mathcal{O}(nK)$ E-step. With approximately 50K trainable parameters in a single-layer RNN, PRISM trains in minutes on a laptop GPU and produces human-interpretable reward maps without manual specification of the temporal horizon.

We evaluate PRISM on three domains of increasing complexity, each testing a different property of the framework. Our central application is the \emph{127-node mouse labyrinth}~\citep{10.7554/eLife.66175}, where PRISM recovers three intentions (water-seeking, homing, exploration) aligned with known biological drives, matching the modes identified by both DIRL~\citep{ashwood2022dynamic} and SWIRL~\citep{ke2025inverse} on the same dataset while achieving higher held-out log-likelihood. A \emph{frustration gridworld} with a hidden counter provides controlled validation that PRISM captures provably non-Markovian switching, which is essential for modeling history-dependent behavioral states such as satiation and fatigue. The \emph{BridgeData~V2 robotic manipulation dataset}~\citep{walke2023bridgedata} tests whether the method generalizes beyond neuroscience: without any supervision, PRISM discovers four temporally coherent manipulation phases (approach, grasp, carry, idle) from human-teleoperated demonstrations. Together, these experiments suggest that discrete intention switching is present in both biological and artificial agents, and that the reward maps PRISM recovers provide a useful lens for interpreting the latent goals behind complex sequential behavior.

In summary, our contributions are:
(i)~the PRISM framework with a proven EM decomposition and closed-form reward recovery;
(ii)~an $\mathcal{O}(nK)$ E-step via posterior factorization, compared to the $\mathcal{O}(nK^2)$ forward--backward pass required by Markov-chain-based alternatives;
(iii)~to our knowledge, the first application of multi-intention IRL to a large-scale robotic manipulation dataset;
(iv)~recovery of nameable, interpretable intentions across three domains without supervision.

\section{Related Work}

\paragraph{Inverse reinforcement learning.}
IRL recovers a reward function from demonstrations, assuming the expert maximizes long-term return. The problem was formalized by \citet{ng2000algorithms}, who showed it is inherently ill-posed. Maximum entropy IRL~\citep{ziebart2008maximum} and its causal variant~\citep{ziebart2010modeling} resolve this ambiguity via entropy regularization. \citet{NEURIPS2020_a4c42bfd} derived a closed-form reward solution via inverse action-value iteration (IAVI), which we use as the inner-loop solver in our EM framework. For a comprehensive survey, see \citet{arora2021survey}.

\paragraph{Multi-intention and dynamic IRL.}
Standard IRL assumes a single fixed reward. In practice, agents switch between distinct strategies within an episode~\citep{ashwood2022mice}. \citet{babes2011apprenticeship} cluster entire trajectories by intention but do not allow within-episode switching. Bayesian nonparametric methods~\citep{dimitrakakis2012bayesian, surana2014bayesian} avoid fixing the number of intentions but scale poorly. DIRL~\citep{ashwood2022dynamic} models reward as a continuously time-varying combination of goal maps; HIQL~\citep{zhu2024multiintention} uses a first-order Markov chain; SWIRL~\citep{ke2025inverse} adds state-dependent transitions with fixed-window history augmentation. PRISM combines the strengths of all three: discrete switching like HIQL but with memory; history-dependent like SWIRL but learned end-to-end; data-driven like DIRL but with discrete intentions and closed-form reward recovery. 

\paragraph{Hierarchical imitation learning and option discovery.}
The options framework~\citep{sutton1999between} decomposes policies into temporally extended sub-policies. CompILE~\citep{kipf2019compile} segments demonstrations into composable latent skills, and play-data approaches~\citep{lynch2020latent} discover reusable motor primitives from unstructured teleoperation. These methods recover \emph{policies} or \emph{skills}; PRISM instead recovers per-intention \emph{reward functions}, which are more compact, transferable across dynamics, and directly interpretable. PRISM can be seen as performing ``inverse option discovery,'' recovering the reward structure that generates the behavioral options.

\paragraph{Imitation learning at scale.}
Behavioral cloning~\citep{ross2011reduction}, GAIL~\citep{ho2016gail}, and recent generalist robot policies such as RT-1~\citep{brohan2023rt1} learn policies directly from demonstrations without recovering reward structure. PRISM complements these approaches: the intention segmentation it produces could serve as a pre-processing step for skill-conditioned imitation learning.

\paragraph{Interpretable reward and decision models.}
Reward decomposition~\citep{juozapaitis2019explainable} and programmatic RL~\citep{verma2018programmatically} seek to make RL decisions transparent. PRISM contributes to this space by recovering discrete, nameable intentions and per-intention reward maps from unlabeled demonstrations.

\paragraph{Positioning.}
PRISM is a table-based, model-based (requires known or estimated transition model $P$), offline multi-intention IRL method. It operates on fully observed demonstration trajectories: all states, actions, and observations are available before inference begins. Its EM algorithm alternates between closed-form reward recovery via IAVI and gradient-based updates to the recurrent intention network, without end-to-end differentiation through the Bellman equation.

\section{Background}

\paragraph{Notation.}
We consider an MDP $\langle \mathcal{S}, \mathcal{A}, P, \gamma \rangle$ with finite state space $\mathcal{S}$, finite action space $\mathcal{A}$, transition function $P(s' \mid s, a)$, and discount factor $\gamma \in [0,1)$. A policy $\pi(a \mid s)$ specifies a distribution over actions in each state.

\paragraph{Hidden-intention MDP.}
\label{sec:hi_mdp}
A Hidden-Intention MDP (HI-MDP) extends the standard MDP with a latent intention space $\mathcal{Z}$ and is denoted $\langle \mathcal{Z}, \mathcal{S}, \mathcal{A}, P, \{r_z\}_{z \in \mathcal{Z}}, \gamma \rangle$. At each timestep $t$, the expert operates under a latent intention $z_t \in \mathcal{Z}$ and receives reward according to $r_{z_t} \colon \mathcal{S} \times \mathcal{A} \to \reals$. The expert also perceives an observation $\varphi_t \in \reals^m$ from the environment; the observation need not be in one-to-one correspondence with the state. For each trajectory $\xi = \{(s_1, a_1), \dots, (s_n, a_n)\}$ there is a corresponding observation sequence $\psi = \{\varphi_1, \dots, \varphi_n\}$, and we denote the set of all such sequences by $\mathcal{O}$. The mechanism by which observations influence intention assignments is left unspecified in the HI-MDP; PRISM instantiates this mechanism via a learned gating function $f_\theta$ (\S\ref{sec:methods-prism}).

\paragraph{Inverse action-value iteration.}
Given demonstrations $\mathcal{D}$ in an MDP with known $P$, IAVI~\citep{NEURIPS2020_a4c42bfd} formulates IRL as maximum likelihood estimation under a Boltzmann policy and solves for the reward in closed form via least squares. We use IAVI as the inner-loop solver in our EM framework; the full optimization program is given in \S\ref{app:iavi}.

\section{Methods}
\label{sec:methods}
\subsection{Probabilistic Recurrent Intention Switching Model}
\label{sec:methods-prism}
\begin{wrapfigure}{r}{0.48\textwidth}
    \vspace{-14pt}
    \centering
    \includegraphics[width=0.48\textwidth]{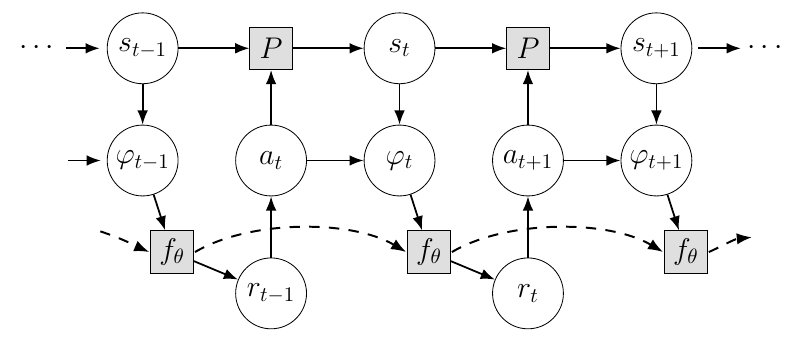}
    \caption{Probabilistic graphical model of the expert's decision process. Dashed lines represent the recurrent connection of $f_\theta$.}
    \label{fig:pgm}
    \vspace{-8pt}
\end{wrapfigure}

We formulate the multi-intention IRL problem under three assumptions.

\begin{assumption}\label{asm:boltzmann}
Each expert demonstration step is generated according to a Boltzmann-optimal policy under one of the reward functions in a $K$-dimensional finite set $\mathcal{R} = \{r_1, \dots, r_K\}$.
\end{assumption}

\begin{assumption}\label{asm:observation}
Each trajectory $\xi$ in the demonstration set $\mathcal{D}$ is accompanied by an observation sequence
$\psi = \{\varphi_1, \dots, \varphi_n\} \in \mathcal{O}$,
as defined in \S\ref{sec:hi_mdp}. The observation $\varphi_i$ may encode any information available at step $i$, including the state, the action, or features derived from either.
\end{assumption}

\begin{assumption}\label{asm:control}
The soft intention assignment at each demonstration step is produced by a parametric function
$f_\theta \colon \reals^m \to \Delta^K$, where
${f_\theta(\varphi_i)}_k$ denotes the weight assigned to
intention $k$ given observation $\varphi_i$, for
$k = 1, \dots, K$. The function $f_\theta$ may maintain
internal state across steps within a trajectory, as realized by recurrent neural networks.
\end{assumption}

Under these assumptions the expert's decision process is fully specified by $\Theta = \{\theta, \mathcal{R}\}$, and the resulting decision network can be represented with a probabilistic graphical model (Figure~\ref{fig:pgm}). We refer to the resulting framework as the \emph{Probabilistic Recurrent Intention Switching Model} (PRISM). The PRISM inference problem consists of determining (1)~a set of reward functions and (2)~the intention index for each demonstration step that best jointly explain the observed expert behavior. An expectation-maximization (EM) algorithm can be devised to iteratively learn $\Theta$. For convenience, we introduce $\eta = \{z_1, \dots, z_n\}$ as the predicted sequence of intention indices for trajectory $\xi$ and corresponding observations $\psi$. Then each iteration of the EM process maximizes the auxiliary function:
\begin{equation}
\label{eq:em_mle}
\text{maximize} \quad J(\Theta^+ \mid \Theta) = \expect_{(\xi,\psi) \sim (\mathcal{D},\mathcal{O}),\, \eta}\log \prob(\xi, \eta \mid \psi, \Theta^+),
\end{equation}
where $\Theta^+$ is the optimization variable and $(\mathcal{D}, \mathcal{O}), \Theta$ are the problem data.

\begin{theorem}\label{thm:decomp}
Solving problem \eqref{eq:em_mle} is equivalent to solving a sequence of independent optimization problems: first maximize over the intention network parameters $\theta^+$,
\begin{equation}\label{eq:theta_opt}
\begin{array}{ll}
    \mbox{\rm maximize (over $\theta^+$)} & \expect_{(\xi,\psi) \sim (\mathcal{D},\mathcal{O})}\left(\sum_{k=1}^K \sum_{i=1}^n \prob(z_i{=}k \mid \xi, \psi, \Theta) \log {f_{\theta^+}(\varphi_i)}_k\right),
\end{array}
\end{equation}
and then maximize independently over each reward $r_k^+ \in \mathcal{R}^+$,
\begin{equation}\label{eq:reward_opt}
\begin{array}{ll}
\mbox{\rm maximize (over $r_k^+$)} & \expect_{(\xi,\psi) \sim (\mathcal{D},\mathcal{O})}\left(\sum_{i=1}^n \prob(z_i{=}k \mid \xi, \psi, \Theta) \log \pi_{r_k^+}(a_i \mid s_i)\right) \\[6pt]
\mbox{\rm subject to} & \pi_{r_k^+}(s, a) = \exp\bigl(Q(s,a) - \log \sum_{a' \in \mathcal{A}} \exp Q(s, a')\bigr) \\
& Q(s,a) = r_k^+(s,a) + \gamma \sum_{s'} P(s' \mid s,a)\max_{a' \in \mathcal{A}} Q(s',a')\\
& s \in \mathcal{S},\quad a \in \mathcal{A}.
\end{array}
\end{equation}
\end{theorem}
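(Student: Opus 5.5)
The plan is to write out the complete-data log-likelihood $\log \prob(\xi,\eta\mid\psi,\Theta^+)$ using the graphical model of Figure~\ref{fig:pgm}, show that it splits into a sum of a $\theta^+$-only term and a $\mathcal{R}^+$-only term, and then take the expectation over $\eta$ under the current posterior. First I factorize the joint. Under Assumption~\ref{asm:control}, given the observation sequence $\psi$ the intention at step $i$ is drawn as $z_i\sim f_{\theta}(\varphi_i)$. Here $f_\theta(\varphi_i)$ means the network output after reading $\varphi_1,\dots,\varphi_i$, so it is a deterministic function of $\psi$. Hence the $z_i$ are conditionally independent given $\psi$. Under Assumption~\ref{asm:boltzmann}, given $z_i=k$ and $s_i$, the action satisfies $a_i\sim\pi_{r_k}(\cdot\mid s_i)$, and $s_{i+1}\sim P(\cdot\mid s_i,a_i)$. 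This gives
\[
\log \prob(\xi,\eta\mid\psi,\Theta^+) = \sum_{i=1}^n \log {f_{\theta^+}(\varphi_i)}_{z_i} + \sum_{i=1}^n \log \pi_{r^+_{z_i}}(a_i\mid s_i) + C,
\]
where $C$ collects the initial-state and transition terms $\log P(s_{i+1}\mid s_i,a_i)$, which do not depend on $\Theta^+$ because $P$ is known.

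Second, I take the expectation over $\eta\sim\prob(\eta\mid\xi,\psi,\Theta)$. Each summand involves only a single $z_i$, so by linearity the expectation needs only the per-step marginals $\prob(z_i{=}k\mid\xi,\psi,\Theta)$. I write each summand as $\sum_k \mathbf{1}[z_i{=}k](\cdot)$ and exchange sums. This gives
\[
J = \expect\Bigl[\sum_{k}\sum_i \prob(z_i{=}k\mid\cdot)\log {f_{\theta^+}(\varphi_i)}_k\Bigr] + \sum_k \expect\Bigl[\sum_i \prob(z_i{=}k\mid\cdot)\log\pi_{r_k^+}(a_i\mid s_i)\Bigr] + \text{const}.
\]
The first term depends only on $\theta^+$ and the $k$-th term of the second sum only on $r_k^+$. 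Because the variables are disjoint, maximizing $J$ is equivalent to maximizing each piece separately, which yields \eqref{eq:theta_opt} and the $K$ problems \eqref{eq:reward_opt}. The constraints in \eqref{eq:reward_opt} simply encode the Boltzmann policy of Assumption~\ref{asm:boltzmann}, defined through the Bellman equation for $r_k^+$. The "sequence" ordering in the statement is immaterial because the problems are independent. I will also note that, by the conditional independence above, the posterior factorizes as $\prob(z_i{=}k\mid\xi,\psi,\Theta)\propto f_\theta(\varphi_i)_k\,\pi_{r_k}(a_i\mid s_i)$, which gives the claimed $\mathcal{O}(nK)$ E-step.

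The main obstacle is justifying the conditional independence used in the first step. The $z_i$ must not influence future observations, hidden states, or states in a way that couples the terms. The RNN hidden state is driven by $\psi$, not by $z$, and $\psi$ is conditioned upon, so the gating weights are fixed given $\psi$. The states $s_{i+1}$ depend on $z_i$ only through $a_i$, which is observed, so the transition factor contains no $z$.

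One subtlety needs care: Assumption~\ref{asm:observation} allows $\varphi_i$ to contain the action $a_i$. I will handle this by treating $\psi$ purely as conditioning data. The statement conditions on $\psi$ throughout, so any dependence between $\varphi_i$ and $a_i$ is absorbed into the modeling choice encoded in Figure~\ref{fig:pgm}. I will state this explicitly as the reading of the graphical model used in the proof.
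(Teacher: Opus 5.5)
Your proposal is correct and follows the paper's proof essentially step for step. You factor the complete-data log-likelihood into per-step gating and policy terms, rewrite each summand with indicators $\mathbb{I}_k(z_i)$, and exchange sums so that only the marginals $\prob(z_i{=}k \mid \xi,\psi,\Theta)$ survive, then split the result into a $\theta^+$-only term and $K$ disjoint reward terms. Two points you make explicit that the paper leaves implicit are useful clarifications: the transition and initial-state factors are $\Theta^+$-independent constants (the paper folds them into $\prob((s_i,a_i)\mid r_k^+)$), and the factorization has to be read as a modeling choice when $\varphi_i$ contains $a_i$.
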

This decomposition is exact, requiring no variational approximation. Each reward subproblem \eqref{eq:reward_opt} reduces to a weighted IAVI problem solvable in closed form. Detailed proof is given in \S~\ref{app:proof}.

\paragraph{Posterior factorization and complexity.}
A key structural consequence is that, conditioned on $(\xi, \psi, \Theta)$, the intention variables $\{z_1, \dots, z_n\}$ are mutually independent. The posterior decomposes as $\prob(\eta \mid \xi, \psi, \Theta) = \prod_{i=1}^n \prob(z_i \mid \xi, \psi, \Theta)$, where each per-step responsibility is the normalized product of the gating output and the per-intention policy:
\begin{align}
    \prob(z_i {=} k \mid \xi, \psi, \Theta)
    &= \frac{{f_\theta(\varphi_i)}_k \, \pi_{r_k}(a_i \mid s_i)}
           {\sum_{j=1}^K {f_\theta(\varphi_i)}_j \, \pi_{r_j}(a_i \mid s_i)},
    \label{eq:posterior}
\end{align}
where $f_\theta$ assigns soft responsibility over intentions given the observed context $\varphi_i$, and $\pi_{r_k}$ scores how well the observed action $a_i$ is explained under intention $k$. Since the intention variables decouple across time steps, the E-step requires only $\mathcal{O}(nK)$ evaluations per trajectory.

\subsection{Training Objective}
\label{sec:criterion}
The training objective for $f_\theta$ combines the negative log-likelihood from the M-step with temporal smoothness penalties. Writing $w_{i,k}$ for the per-step responsibility in Eq.~\eqref{eq:posterior}:
\begin{equation}\label{eq:loss}
\mathcal{L} \;=\; \mathcal{L}_{\mathrm{NLL}} \;+\; \lambda_{\ell_1}\,\mathcal{L}_{\ell_1} \;+\; \lambda_{\mathrm{kl}}\,\mathcal{L}_{\mathrm{kl}},
\end{equation}
where
\begin{align*}
\mathcal{L}_{\mathrm{NLL}} &= -\expect_{(\xi,\psi) \sim (\mathcal{D},\mathcal{O})} \sum_{i=1}^n \sum_{k=1}^K w_{i,k} \log {f_\theta(\varphi_i)}_k, \\
\mathcal{L}_{\ell_1} &= \expect_{(\xi,\psi) \sim (\mathcal{D},\mathcal{O})} \sum_{i=2}^n \sum_{k=1}^K w_{i,k} \bigl|{f_\theta(\varphi_i)}_k - {f_\theta(\varphi_{i-1})}_k\bigr|, \\
\mathcal{L}_{\mathrm{kl}} &= \expect_{(\xi,\psi) \sim (\mathcal{D},\mathcal{O})} \sum_{i=2}^n \Dkl\bigl(f_\theta(\varphi_{i-1}) \,\|\, f_\theta(\varphi_i)\bigr).
\end{align*}
where the smoothness terms are computed over consecutive time steps within each trajectory. The $\ell_1$-penalty suppresses rapid intention switches; the KL-divergence term penalizes distributional shift more gently, preserving distributional shape. The two are complementary: $\ell_1$ provides sparse switching while KL preserves smooth transitions. Both weights $\lambda_{\ell_1}, \lambda_{\mathrm{kl}} \geq 0$ are set in \S\ref{exp:labyrinth}. We note that these penalties act as discriminative regularizers rather than a probabilistic prior over intention sequences. The RNN hidden state provides implicit temporal modeling, while the regularizers encourage the output distribution to vary smoothly.

\begin{algorithm}[t]
\caption{PRISM: Probabilistic Recurrent Intention Switching Model}
\label{alg:prism}
\begin{algorithmic}[1]
\REQUIRE Expert demonstrations and observations $(\mathcal{D}, \mathcal{O})$, intention network $f_\theta$, reward set dimension $K$.
\STATE \textbf{initialize} $\theta$, $r_1, \dots, r_K$.
\REPEAT
\STATE Compute posterior $\prob(z_i {=} k \mid \xi, \psi, \Theta)$ for each demonstration step via Eq.~\eqref{eq:posterior}.
\STATE Update $\theta$ by applying SGD on $(\mathcal{D}, \mathcal{O})$ with loss $\mathcal{L}$ (Eq.~\eqref{eq:loss}): $\theta := \argmin_\theta \mathcal{L}(\theta)$.
\FOR{$k = 1, \dots, K$}
\STATE Update $r_k$ by solving problem~\eqref{eq:reward_opt} via IAVI with each demonstration weighted by\\ $\prob(z_i {=} k \mid \xi, \psi, \Theta)$.
\ENDFOR
\UNTIL{stopping criterion is satisfied.}
\end{algorithmic}
\end{algorithm}

\subsection{Intention Network Architecture}
The intention network maps a sequence of observations $(\varphi_1, \dots, \varphi_n)$ to a per-step distribution over $K$ intentions. Each observation $\varphi_t$ is embedded into a $d$-dimensional vector; the sequence of embeddings is processed by a recurrent encoder (RNN, LSTM, or Transformer); and each encoder output is projected to $K$ logits followed by a softmax to produce $f_\theta(\varphi_t) \in \Delta^K$. In the tabular experiments we set $\varphi_t = (s_t, a_t)$ and use learned state and action embedding matrices; in visual domains $\varphi_t$ may be the output of a pretrained encoder. A single-layer IntentionRNN with $d{=}128$ and $K{=}4$ has approximately 50K trainable parameters.

\section{Experiments}

\subsection{Frustration Gridworld}\label{exp:gridworld}

This experiment validates PRISM on a controlled environment where the ground-truth intention-switching mechanism is known and provably non-Markovian.

\paragraph{Setup.}
A $5{\times}5$ gridworld with five actions (up, down, left, right, stay) and stochastic dynamics (90\% success rate). The expert has two policies: $\pi_{\text{goal}}$ toward $(4,4)$ and $\pi_{\text{abandon}}$ toward the origin. A hidden frustration counter $c$, initially zero, increments at each barrier encounter; the switching probability is $\min\{0.15c, 0.9\}$, making intention transitions strictly non-Markovian. The counter resets upon switching. We generate 1024 trajectories and evaluate with 5-fold cross-validation.

\paragraph{Results.}
We compare PRISM ($K{=}2$) against HIQL~\citep{zhu2024multiintention}, IAVI~\citep{NEURIPS2020_a4c42bfd}, maximum causal entropy IRL~\citep{ziebart2010modeling}, and maximum entropy IRL~\citep{ziebart2008maximum}. PRISM achieves the highest test log-likelihood with the smallest variance across folds (Figure~\ref{fig:gridworld}a) and the lowest expected value difference under both intentions (Table~\ref{tab:evd}). HIQL follows closely in log-likelihood but shows higher EVD under the abandon intention, attributable to its Markov model being unable to capture the cumulative counter. Figure~\ref{fig:gridworld}b shows that PRISM's recovered state-value heatmaps closely match the ground truth under both intentions; full comparisons with all baselines are in \S\ref{app:gridworld}. Figure~\ref{fig:gridworld}c displays the temporal intention posterior for a representative trajectory overlaid with the hidden frustration counter: PRISM's posterior tracks the accumulating frustration, switching sharply after repeated barrier encounters.

\begin{figure}
    \centering
    \raisebox{-0.5\height}{\includegraphics[width=0.31\linewidth]{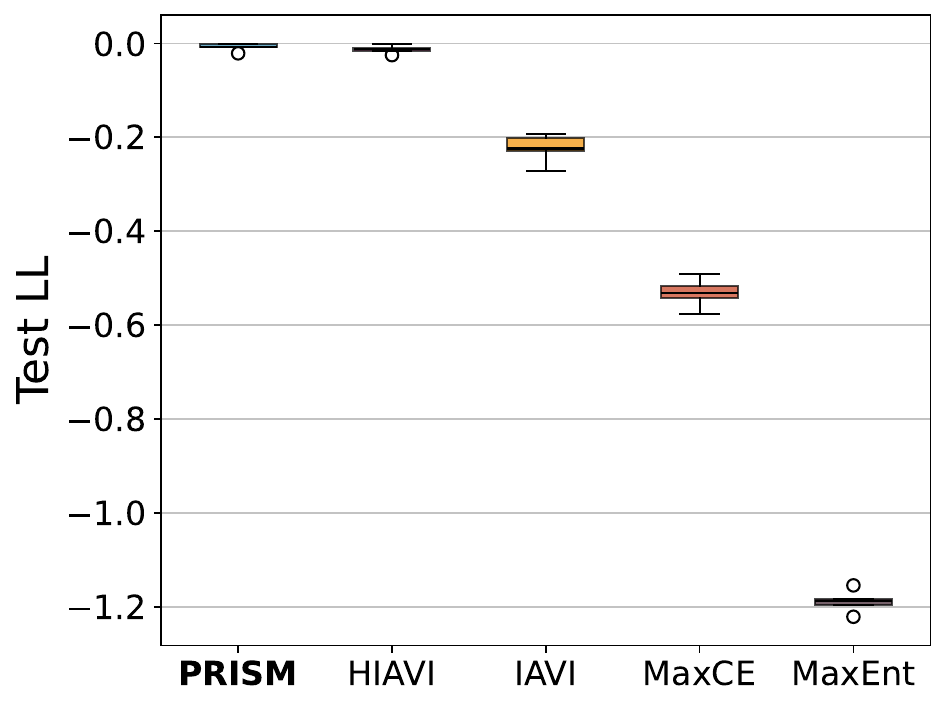}}\hfill
    \raisebox{-0.5\height}{\includegraphics[width=0.32\linewidth]{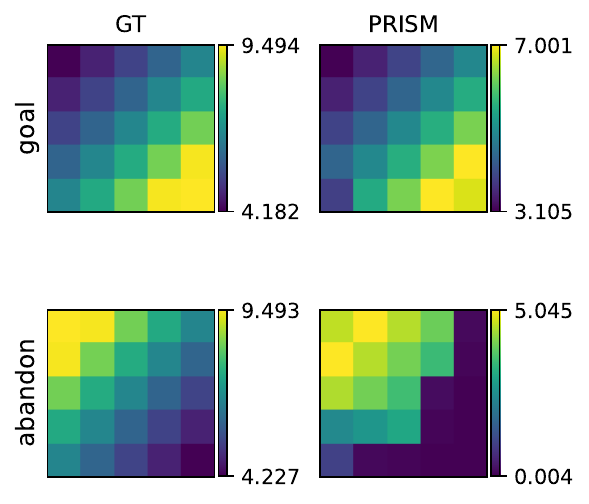}}\hfill
    \raisebox{-0.52\height}{\includegraphics[width=0.34\linewidth]{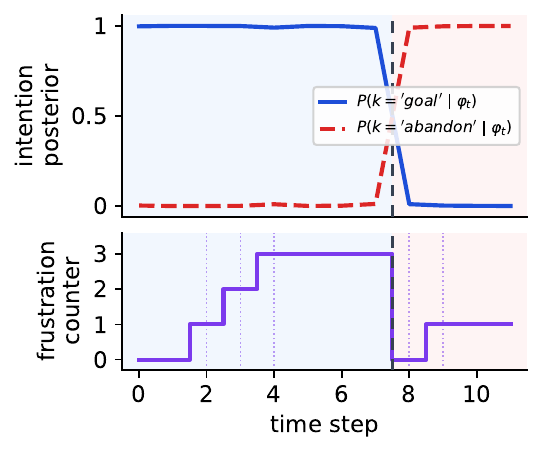}}
    \caption{Frustration gridworld. \textbf{(a)}~Test log-likelihood: PRISM achieves the highest score with the smallest variance. \textbf{(b)}~State-value heatmaps (GT vs PRISM) under the goal and abandon intentions. \textbf{(c)}~Temporal intention posterior (top) overlaid with the hidden frustration counter (bottom) for a representative trajectory; PRISM's posterior tracks the accumulating frustration and switches sharply after repeated barrier encounters. Full comparisons with all baselines are in \S\ref{app:gridworld}.}
    \label{fig:gridworld}
\end{figure}

\subsection{Mouse Labyrinth Navigation}\label{exp:labyrinth}

Moving from simulation to real biological data, we test whether PRISM recovers interpretable intentions aligned with known biological drives.

\paragraph{Dataset and benchmarks.}
We use the freely moving mouse navigation dataset from \citet{10.7554/eLife.66175}, the same benchmark used by DIRL~\citep{ashwood2022dynamic}, HIQL~\citep{zhu2024multiintention}, and SWIRL~\citep{ke2025inverse}. Ten water-restricted mice explored a 127-node labyrinth for 7 hours in darkness; water was available at a designated end node but collectible at most once per 90~seconds, creating a non-Markovian reward structure. Following \citet{ke2025inverse}, we segment raw sequences into 238 trajectories of 500~steps each. Models are evaluated via 5-fold cross-validation. We compare PRISM ($K{=}3$, IntentionRNN) against IAVI, SWIRL~(S-2), and maximum causal entropy IRL. Smoothness penalties are set to $\lambda_{\ell_1}{=}2.22$, $\lambda_{\mathrm{kl}}{=}1.48$; in all other experiments both are zero.

\paragraph{Behavior prediction.}
PRISM achieves the highest test log-likelihood ($-0.65$), outperforming SWIRL S-2 ($-0.73$) on the same data (Figure~\ref{fig:labyrinth}a). The improvement is notable because both methods model history-dependent intention dynamics: PRISM obtains a better fit while learning the relevant history horizon end-to-end, without manual state augmentation. Figure~\ref{fig:labyrinth}b shows that all three architectures (RNN, LSTM, Transformer) improve monotonically with $K$; we adopt the vanilla RNN as default for its simpler dynamics and stable performance. We select $K{=}3$ because the labyrinth task contains only two explicit behavioral events (water-seeking and homing), and the plateau in test log-likelihood beginning at $K{=}4$ (Figure~\ref{fig:labyrinth}b) indicates that three intentions already capture the dominant behavioral structure, with the third corresponding to exploration. This setting also allows direct comparison with SWIRL~\citep{ke2025inverse}, which reports its primary results at $K{=}3$.

\paragraph{Recovered reward maps and segmentation.}
Figure~\ref{fig:labyrinth}c shows the greedy policy and per-state action confidence for each intention, where color depth indicates how decisively the greedy action dominates over alternatives. Under \emph{water}, the greedy policy traces a clear path from the entrance toward the water port, with high confidence along the main corridor. Under \emph{home}, the major branching nodes consistently point toward the entrance, forming convergent flow inward. Under \emph{explore}, actions are distributed across peripheral nodes with no dominant target and lower overall confidence, reflecting the diffuse nature of exploratory behavior. These three modes are consistent with those identified by SWIRL~\citep{ke2025inverse}.
Figure~\ref{fig:labyrinth}d shows that hybrid regularization produces temporally coherent segments whose boundaries align with behavioral events (water-port visits, home visits). The four configurations shown in Figure~\ref{fig:labyrinth}d span the regularization spectrum from unconstrained to fully penalized.

\begin{figure}
    \centering
    \begin{minipage}[c]{0.27\textwidth}
        \centering
        \includegraphics[width=0.95\linewidth]{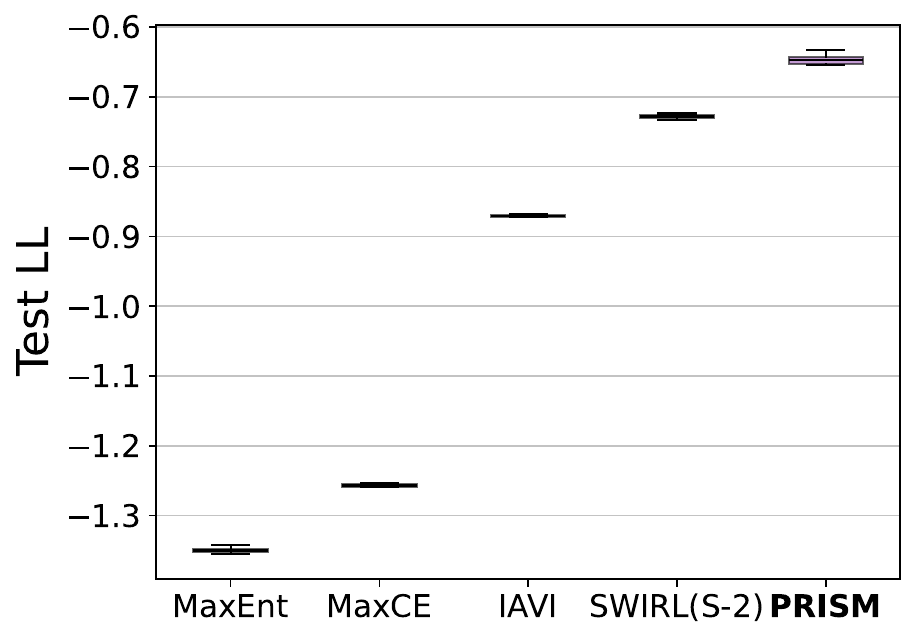}\\[2pt]
        \includegraphics[width=\linewidth]{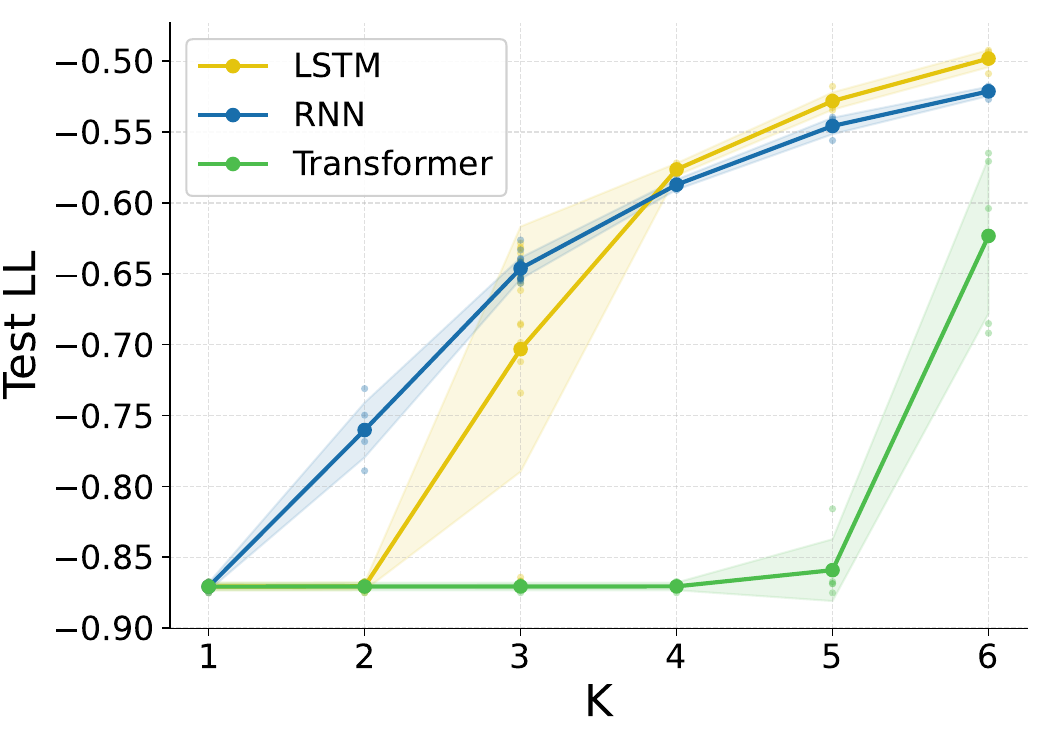}
    \end{minipage}
    \hfill
    \begin{minipage}[c]{0.70\textwidth}
        \centering
        \includegraphics[width=\linewidth]{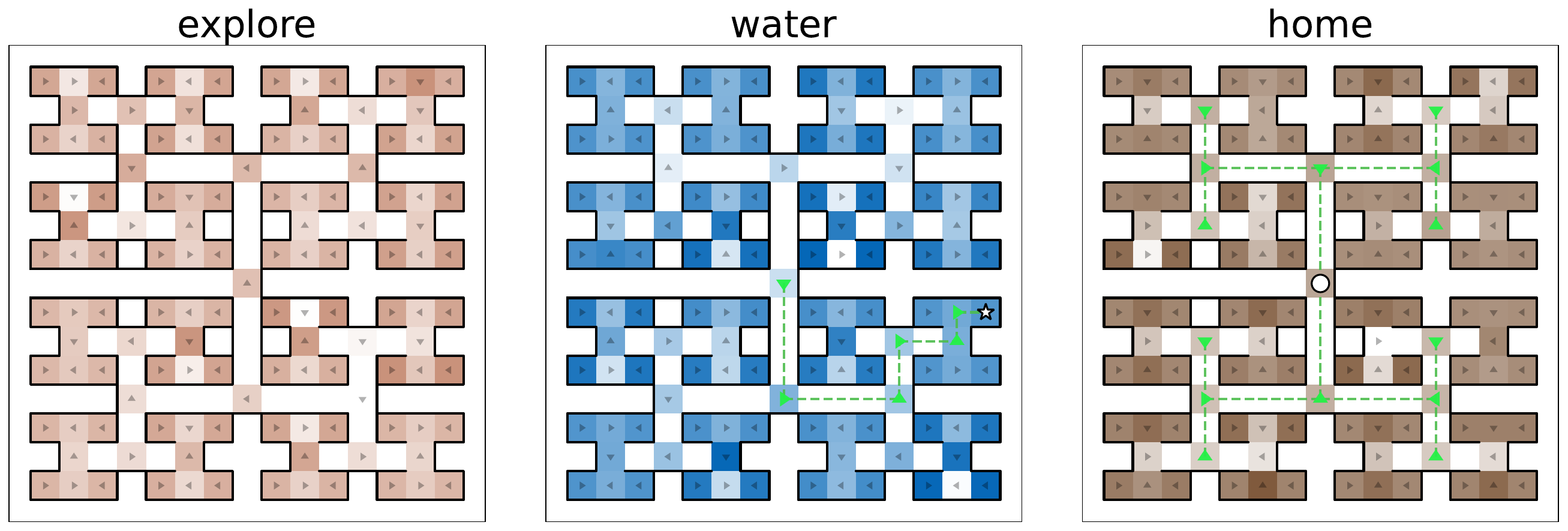}
    \end{minipage}

    \includegraphics[width=0.97\linewidth]{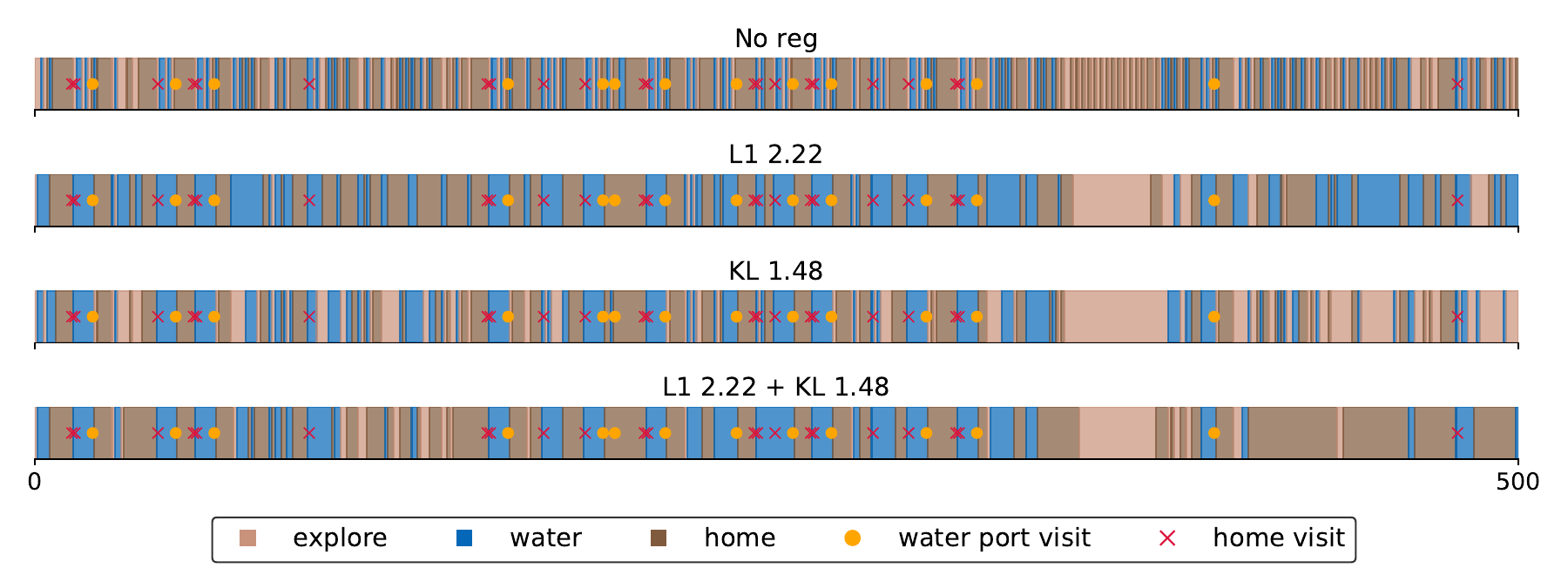}
    \caption{Labyrinth results (PRISM, $K{=}3$, IntentionRNN, hybrid regularization, 238 mouse trajectories). \textbf{(a)}~Test log-likelihood: PRISM outperforms all baselines. \textbf{(b)}~Test log-likelihood vs $K$ for three architectures. \textbf{(c)}~Recovered reward maps and greedy policy flow for the three inferred intentions (star: water port; circle: entrance). \textbf{(d)}~Temporal intention segmentation under four regularization configurations. Orange dots: water-port visits; red crosses: home visits.}
    \label{fig:labyrinth}
\end{figure}

Table~\ref{tab:timing} reports wall-clock timing. PRISM converges in ${\sim}140$ EM iterations on the labyrinth (${\sim}5$~min total) on a NVIDIA RTX 3060 mobile GPU, and ${\sim}80$ iterations on Bridge~V2 (${\sim}12$~min), on a NVIDIA RTX 5060Ti GPU.

\begin{table}
\centering
\small
\begin{tabular}{lccccc}
\toprule
\textbf{Dataset}
    & \textbf{E-step}
    & \multicolumn{2}{c}{\textbf{M-step (per latent)}}
    & \textbf{Total / iter}
    & \textbf{Conv.\ iters}\\
\cmidrule(lr){3-4}
    & (posterior) & IAVI & Intention net & & \\
\midrule
Labyrinth   & 0.03\,s & 0.60\,s & 0.09\,s & 1.92\,s & $\sim$140 \\
Bridge V2   & 4.10\,s & 1.53\,s & 2.90\,s & 8.54\,s & $\sim$80\\
\bottomrule
\end{tabular}
\caption{Wall-clock time per EM iteration.}
\label{tab:timing}
\end{table}

\subsection{BridgeData~V2: Robotic Manipulation}\label{sec:bridge-v2}

We push PRISM to a qualitatively new regime with high-dimensional visual observations and no prior knowledge of intentions, which constitutes, to our knowledge, the first application of multi-intention IRL to a large-scale robotic manipulation dataset.

\paragraph{Motivation.}
In the labyrinth, recovered intentions align with known biological drives. BridgeData~V2 contains no such prior: each trajectory is a human-teleoperated manipulation sequence. If PRISM nonetheless discovers temporally coherent segments corresponding to recognizable manipulation phases, it provides evidence that intention structure is a general property of goal-directed behavior rather than an artifact specific to biological agents, and that the recovered reward maps can help explain the purpose behind human-operated demonstrations.

\paragraph{Continuous-to-discrete pipeline.}
Each $256{\times}256$ RGB frame is encoded by a frozen pretrained visual encoder; 7D continuous actions are retained raw. $k$-means clustering produces $|\mathcal{S}|$ state tokens and $|\mathcal{A}|{=}32$ action tokens. We compare DINOv2~\citep{oquab2024dinov2} (self-supervised; ViT-S/B/L) and SigLIP2~\citep{tschannen2025siglip} (vision-language; ViT-B). Table~\ref{tab:state_granularity} reports discretization statistics across encoders and granularities. Discretization quality is governed by encoder architecture rather than scale: DINOv2-S, -B, and -L produce nearly identical statistics, so we adopt the smallest variant to minimize compute. The critical gap is between DINOv2 and SigLIP2: at $|\mathcal{S}|{=}2048$, SigLIP2-B collapses visually distinct frames into shared tokens (11.8 average revisits vs 7.1 for DINOv2-S), reducing the action-level state discrimination that IRL requires. At our default granularity ($|\mathcal{S}|{=}2048$, DINOv2-S), approximately 75\% of visited states receive multiple observations, providing adequate support for transition estimation. This encoder advantage is reflected in test log-likelihood: DINOv2 consistently outperforms SigLIP2 across all configurations (Figure~\ref{fig:bridge_results}).

\begin{table}
\centering
\small
\caption{Effect of $k$-means state granularity on trajectory discretization statistics (mean$\pm$std across trajectories).}
\label{tab:state_granularity}
\setlength{\tabcolsep}{5pt}
\begin{tabular}{llcccc}
\toprule
\multirow{2}{*}{\textbf{Encoder}} & \multirow{2}{*}{\textbf{Metric}}
    & \multicolumn{4}{c}{\textbf{Number of states} $|\mathcal{S}|$} \\
\cmidrule(lr){3-6}
    &  & 1024 & 2048 & 3072 & 4096 \\
\midrule
\multirow{3}{*}{DINOv2-S} & Coverage (\%) & $0.64{\scriptstyle\pm0.40}$ & $0.37{\scriptstyle\pm0.23}$ & $0.26{\scriptstyle\pm0.17}$ & $0.21{\scriptstyle\pm0.13}$ \\
    & Avg.\ revisits & $8.0{\scriptstyle\pm6.1}$ & $7.1{\scriptstyle\pm5.7}$ & $6.8{\scriptstyle\pm5.6}$ & $6.5{\scriptstyle\pm5.6}$ \\
    & Singleton (\%) & $21.7{\scriptstyle\pm19.1}$ & $25.5{\scriptstyle\pm19.9}$ & $27.6{\scriptstyle\pm20.5}$ & $28.9{\scriptstyle\pm20.9}$ \\
\midrule
\multirow{3}{*}{DINOv2-B} & Coverage (\%) & $0.69{\scriptstyle\pm0.42}$ & $0.40{\scriptstyle\pm0.25}$ & $0.28{\scriptstyle\pm0.18}$ & $0.22{\scriptstyle\pm0.14}$ \\
    & Avg.\ revisits & $7.6{\scriptstyle\pm5.9}$ & $6.6{\scriptstyle\pm5.4}$ & $6.3{\scriptstyle\pm5.3}$ & $6.0{\scriptstyle\pm5.1}$ \\
    & Singleton (\%) & $23.4{\scriptstyle\pm19.3}$ & $27.4{\scriptstyle\pm20.2}$ & $29.6{\scriptstyle\pm20.7}$ & $30.9{\scriptstyle\pm21.0}$ \\
\midrule
\multirow{3}{*}{DINOv2-L} & Coverage (\%) & $0.69{\scriptstyle\pm0.42}$ & $0.40{\scriptstyle\pm0.25}$ & $0.29{\scriptstyle\pm0.18}$ & $0.23{\scriptstyle\pm0.14}$ \\
    & Avg.\ revisits & $7.4{\scriptstyle\pm5.9}$ & $6.5{\scriptstyle\pm5.4}$ & $6.2{\scriptstyle\pm5.3}$ & $5.9{\scriptstyle\pm5.2}$ \\
    & Singleton (\%) & $23.4{\scriptstyle\pm19.1}$ & $27.1{\scriptstyle\pm20.0}$ & $29.2{\scriptstyle\pm20.4}$ & $30.8{\scriptstyle\pm20.7}$ \\
\midrule
\multirow{3}{*}{SigLIP2-B} & Coverage (\%) & $0.38{\scriptstyle\pm0.25}$ & $0.22{\scriptstyle\pm0.15}$ & $0.16{\scriptstyle\pm0.11}$ & $0.13{\scriptstyle\pm0.09}$ \\
    & Avg.\ revisits & $13.3{\scriptstyle\pm9.6}$ & $11.8{\scriptstyle\pm8.9}$ & $11.0{\scriptstyle\pm8.4}$ & $10.5{\scriptstyle\pm8.1}$ \\
    & Singleton (\%) & $14.7{\scriptstyle\pm18.5}$ & $17.5{\scriptstyle\pm19.2}$ & $19.0{\scriptstyle\pm19.6}$ & $20.0{\scriptstyle\pm19.9}$ \\
\bottomrule
\end{tabular}
\end{table}

\paragraph{Results.}
Test log-likelihood improves monotonically with $K$ (Figure~\ref{fig:bridge_results}): from $-1.81$ at $K{=}2$ to $-0.82$ at $K{=}6$ with no saturation, unlike the labyrinth where gains plateau by $K{=}4$. IntentionRNN and IntentionLSTM perform similarly; IntentionTransformer lags at small $K$ but narrows the gap as $K$ increases, reproducing the same ranking observed on the labyrinth and supporting IntentionRNN as a robust default. We report visualizations at $K{=}4$ because Bridge~V2 trajectories are relatively short; at $K{=}5$ and beyond, segments become excessively brief and lose interpretability, with new classes capturing transient gripper adjustments rather than distinct behavioral modes.

Figure~\ref{fig:arm_intentions} visualizes per-timestep intention assignments ($K{=}4$, DINOv2-S). Without any supervision, PRISM recovers four classes: APPROACH/DEPART (arm moving toward or away from target), GRASP (gripper closing), CARRY (transporting object), and IDLE (minor adjustments). The boundaries are sharp and consistent across trajectories with different objects and environments, confirming temporally coherent segmentation. Additional examples are provided in \S\ref{app:bridge_extra}.

\begin{figure}
    \centering
    \includegraphics[width=0.99\linewidth]{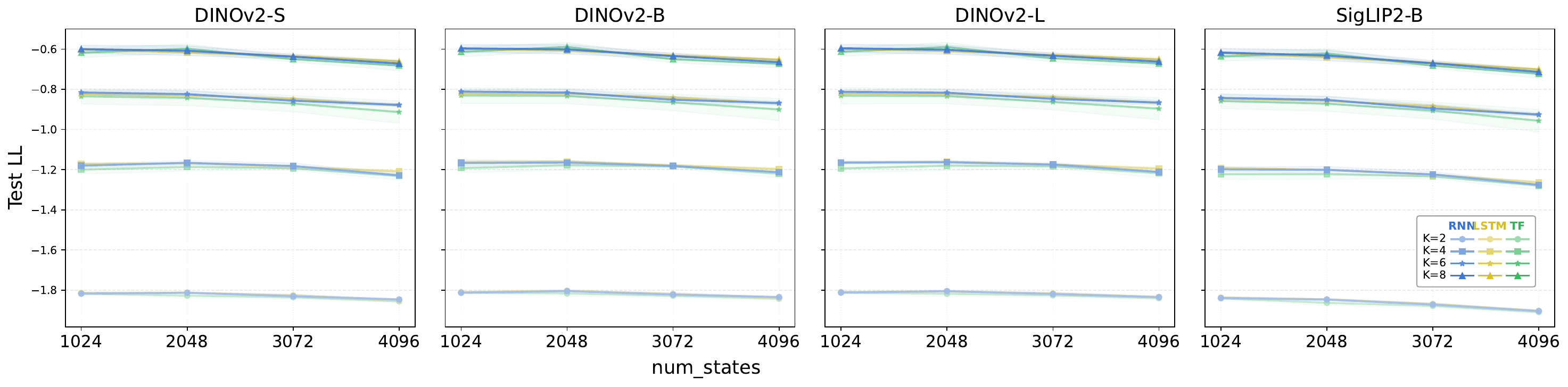}
    \caption{BridgeData~V2: test log-likelihood by encoder, intention network, and number of latents $K$.}
    \label{fig:bridge_results}
\end{figure}

\begin{figure}
    \centering
    \includegraphics[width=\linewidth]{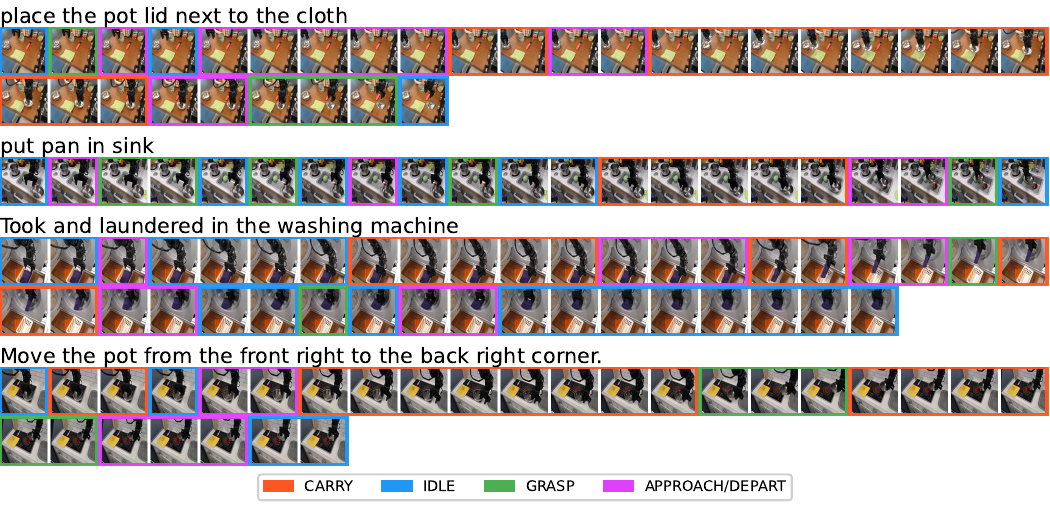}
    \caption{Per-timestep intention assignments on BridgeData~V2 trajectories. Frame borders are color-coded by predicted intention: orange (CARRY), blue (IDLE), green (GRASP), magenta (APPROACH/DEPART).}
    \label{fig:arm_intentions}
\end{figure}

\section{Conclusion and Discussion}
We presented PRISM, an EM-based framework for multi-intention IRL that parameterizes intention dynamics through a lightweight recurrent neural network. By replacing the Markov chain of HIQL and the fixed-window augmentation of SWIRL with an end-to-end learned recurrent mapping, PRISM adapts to the temporal structure of each dataset without manual design choices, while retaining closed-form reward recovery. The proven EM decomposition separates the reward and gating subproblems exactly, and the $\mathcal{O}(nK)$ E-step scales gracefully. Experiments on three progressively complex domains, from an abstract gridworld validating non-Markovian theory, through real mouse behavior confirming biological interpretability, to large-scale robotic manipulation demonstrating cross-domain generality, show that PRISM consistently achieves the highest held-out likelihood while recovering nameable intentions from unlabeled data. The recovered reward maps provide a structured characterization of the latent goals behind observed behavior, revealing not just \emph{what} an agent did but \emph{which objective} it was pursuing at each moment.

\paragraph{Limitations and future work.}
(a)~\emph{Tabular assumption.} PRISM currently requires a discrete MDP. Our $k$-means pipeline bridges this gap for visual domains but introduces quantization error. Learned codebooks such as VQ-BeT~\citep{lee2024behavior} or SAQ~\citep{luo2023saq} could adapt cluster boundaries to maximize reward-recovery quality. However, discretization has fundamental limits: even cutting-edge vision-language-action models such as $\pi_{0.5}$~\citep{physical2025pi05} employ action tokenizers like FAST for pre-training their language-model backbone, yet such tokenization serves the model architecture rather than enabling fine-grained dexterous control directly. The continuous structure that dexterous manipulation demands is inherently lost through any discretization, which motivates moving beyond tabular methods entirely.
(b)~\emph{Model-free extension.} The natural path forward is to replace IAVI with model-free inverse Q-learning~\citep{NEURIPS2020_a4c42bfd}, which operates directly in continuous state--action spaces via function approximation and does not require an explicit transition matrix. This would remove both the discretization bottleneck of~(a) and the model-based dependency, enabling PRISM to scale to dexterous robotic domains where tabular representations are infeasible.
(c)~\emph{Online extension.} PRISM operates offline over fully observed trajectories, which suffices for post-hoc behavioral analysis but precludes real-time intention monitoring. Introducing a probabilistic transition prior $P(z_t \mid z_{t-1}, \varphi_{t-1})$ would enable sequential filtering for online applications. In preliminary experiments, such a Bayesian variant achieved comparable log-likelihood but produced less interpretable reward maps, suggesting that the prior structure requires careful design to preserve reward quality.
(d)~\emph{Model selection.} Test log-likelihood improves monotonically with $K$, but beyond the true number of behavioral modes the additional intentions fragment trajectories into unnameable segments, particularly when trajectories are short (\S\ref{sec:bridge-v2}). A single-layer RNN with 128 hidden units already saturates performance on both datasets (Table~\ref{tab:model_type}), suggesting that the switching dynamics in these domains are low-dimensional. Longer, more complex demonstrations would be needed to justify both larger $K$ and more expressive sequence models.

\section*{Acknowledgments}
This work has been funded as part of BrainLinks-BrainTools, which is funded by the Federal Ministry of Economics, Science and Arts of Baden-W\"urttemberg within the sustainability program for projects of the Excellence Initiative II, and CRC/TRR 384 ``IN-CODE''.

\bibliographystyle{plainnat}
\bibliography{refs}

\newpage
\appendix
\setcounter{equation}{0}
\numberwithin{equation}{section}
\setcounter{figure}{0}
\numberwithin{figure}{section}
\setcounter{table}{0}
\numberwithin{table}{section}

\section{Theoretical and Technical Details}

\subsection{IAVI Formulation}
\label{app:iavi}
Given expert demonstrations $\mathcal{D}$, the IRL problem under a Boltzmann policy is formulated as:
\begin{equation}\label{eq:irl_mle}
\begin{array}{ll}
\text{maximize} & \expect_{(\xi,\psi) \sim (\mathcal{D},\mathcal{O})}  \log \prob \left( \xi \mid \pi_r \right) \\
\text{subject to} & \pi_r(a \mid s) = \exp\left( Q(s, a) - \log \textstyle\sum_{a' \in \mathcal{A}} \exp Q(s, a') \right)\\
& Q(s, a) = r(s, a) + \gamma \textstyle\sum_{s'} P(s' \mid s, a) \max_{a' \in \mathcal{A}} Q(s', a')\\
&s \in \mathcal{S},\quad a \in \mathcal{A}
\end{array}
\end{equation}
where $r$ is the optimization variable. When $P$ is known, this can be solved in closed form via least squares, yielding IAVI~\citep{NEURIPS2020_a4c42bfd}.

\subsection{Proof of Theorem~\ref{thm:decomp}}
\label{app:proof}
\begin{proof}
The objective function $J(\Theta^+ \mid \Theta)$ from problem~\eqref{eq:em_mle} can be written as:
\begin{align}
    &J(\Theta^+ \mid \Theta)\\
    =\quad& \expect_{(\xi,\psi) \sim (\mathcal{D},\mathcal{O}),\, \eta}\log \prob(\xi, \eta \mid \psi, \Theta^+) \notag\\
    =\quad& \expect_{(\xi,\psi) \sim (\mathcal{D},\mathcal{O})}
       \left(\sum_\eta \prob(\eta \mid \xi, \psi, \Theta)\;
             \log \prob(\xi, \eta \mid \psi, \Theta^+)\right) \notag\\
    =\quad& \expect_{(\xi,\psi) \sim (\mathcal{D},\mathcal{O})}
       \left(\sum_\eta \prob(\eta \mid \xi, \psi, \Theta)\;
             \sum_{i=1}^n \log \prob(z_i^+ {=} z_i \mid \varphi_i, \theta^+)\,
             \prob\bigl((s_i, a_i) \mid r_{z_i}^+\bigr)\right) \notag\\
    =\quad& \expect_{(\xi,\psi) \sim (\mathcal{D},\mathcal{O})}
       \left(\sum_\eta \prob(\eta \mid \xi, \psi, \Theta)\;
             \sum_{i=1}^n \sum_{k=1}^K \mathbb{I}_k(z_i)\,
             \log \prob(z_i^+ {=} k \mid \varphi_i, \theta^+)\,
             \prob\bigl((s_i, a_i) \mid r_k^+\bigr)\right) \notag\\
    =\quad& \expect_{(\xi,\psi) \sim (\mathcal{D},\mathcal{O})}
       \Bigg(\sum_{k=1}^K \sum_{i=1}^n
             \underbrace{\sum_\eta \prob(\eta \mid \xi, \psi, \Theta)\,\mathbb{I}_k(z_i)}_{\displaystyle = \,\prob(z_i = k \mid \xi, \psi, \Theta)} \log \prob(z_i^+ {=} k \mid \varphi_i, \theta^+)\,
             \prob\bigl((s_i, a_i) \mid r_k^+\bigr)\Bigg) \notag\\
    =\quad& \underbrace{\expect_{(\xi,\psi) \sim (\mathcal{D},\mathcal{O})}
       \left(\sum_{k=1}^K \sum_{i=1}^n
             \prob(z_i {=} k \mid \xi, \psi, \Theta)\;
             \log \prob(z_i^+ {=} k \mid \varphi_i, \theta^+)\right)}_{\text{(I): depends only on }\theta^+} \label{eq:proof_7a}\\
    &\quad + \underbrace{\expect_{(\xi,\psi) \sim (\mathcal{D},\mathcal{O})}
       \left(\sum_{k=1}^K \sum_{i=1}^n
             \prob(z_i {=} k \mid \xi, \psi, \Theta)\;
             \log \prob\bigl((s_i, a_i) \mid r_k^+\bigr)\right)}_{\text{(II): depends only on }\mathcal{R}^+}, \label{eq:proof_7b}
\end{align}
where $\mathbb{I}_k$ is the indicator function with $\mathbb{I}_k(x) = 1$ for $x = k$ and $0$ otherwise. Thus maximizing $J(\Theta^+ \mid \Theta)$ over $\Theta^+$ is equivalent to separately maximizing~\eqref{eq:proof_7a} over~$\theta^+$ and~\eqref{eq:proof_7b} over~$\mathcal{R}^+ = \{r_1^+, \dots, r_K^+\}$.

By Assumption~\ref{asm:control}, ${f_\theta(\varphi)}_k = \prob(r_k \mid \varphi)$, so the first optimization problem becomes~\eqref{eq:theta_opt}. In~\eqref{eq:proof_7b}, distinct~$k$ share no parameters, so the maximization decomposes into $K$ independent subproblems. Each has the same structure as the single-intention IRL problem~\eqref{eq:irl_mle}, with the $i$-th demonstration weighted by $\prob(z_i = k \mid \xi, \psi, \Theta)$, yielding~\eqref{eq:reward_opt}. The Boltzmann-policy constraints are introduced to make each subproblem tractable via IAVI.
\end{proof}

\subsection{Intention Network Architecture Details}
\label{app:architecture}
Each step $(s_t, a_t)$ is encoded by two independent learned embedding matrices $E_s \in \reals^{|\mathcal{S}| \times d}$ and $E_a \in \reals^{|\mathcal{A}| \times d}$, producing input $x_t = E_s(s_t) + E_a(a_t)$. Batches of variable-length trajectories are zero-padded with a binary mask. The recurrent variants (RNN, LSTM) use a single recurrent layer with hidden dimension $d'$; the Transformer variant applies sinusoidal positional encoding and a standard encoder with a padding mask. In all variants, $K$ logits are converted to $f_\theta(\varphi_t) \in \Delta^{K}$ via softmax.

\section{Gridworld: Full Results}
\label{app:gridworld}

\begin{table}
    \centering
    \begin{tabular}{lcccc}
        \toprule
        \multirow{2}{*}{\textbf{Method}}
            & \multicolumn{2}{c}{\textbf{EVD (MAE)}}
            & \multicolumn{2}{c}{\textbf{EVD ($s_0$)}} \\
        \cmidrule(lr){2-3} \cmidrule(lr){4-5}
            & goal & abandon & goal & abandon \\
        \midrule
        PRISM ($K{=}2$)       & $\mathbf{2.40 \pm 0.37}$ & $\mathbf{5.60 \pm 0.45}$ & $\mathbf{-1.74 \pm 0.59}$ & $\mathbf{-6.02 \pm 0.85}$ \\
        HIQL ($K{=}2$)        & $2.51 \pm 0.13$ & $6.12 \pm 0.38$ & $-1.95 \pm 0.54$ & $-6.80 \pm 1.25$ \\
        IAVI ($K{=}1$)         & $2.06 \pm 0.02$ & $6.74 \pm 0.00$ & $-1.41 \pm 0.01$ & $-9.20 \pm 0.00$ \\
        MaxCausalEnt ($K{=}1$) & $5.32 \pm 0.00$ & $6.67 \pm 0.00$ & $-3.32 \pm 0.00$ & $-9.12 \pm 0.00$ \\
        MaxEnt ($K{=}1$)       & $6.61 \pm 0.00$ & $6.76 \pm 0.00$ & $-4.16 \pm 0.00$ & $-9.10 \pm 0.00$ \\
        \bottomrule
    \end{tabular}
    \caption{Expected value difference on the frustration gridworld (5-fold CV). Closer to zero is better.}
    \label{tab:evd}
\end{table}

\begin{figure}
    \centering
    \includegraphics[width=0.98\linewidth]{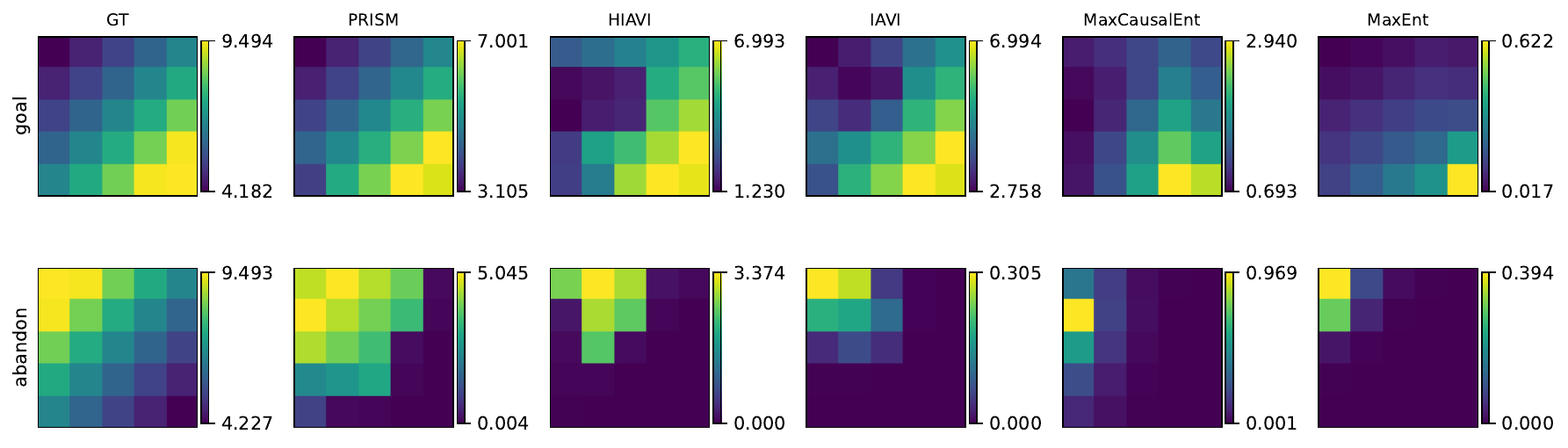}
    \caption{Full state-value heatmaps on the frustration gridworld for all methods.}
    \label{fig:grid_heatmaps_full}
\end{figure}

\begin{figure}
    \centering
    \includegraphics[width=0.94\linewidth]{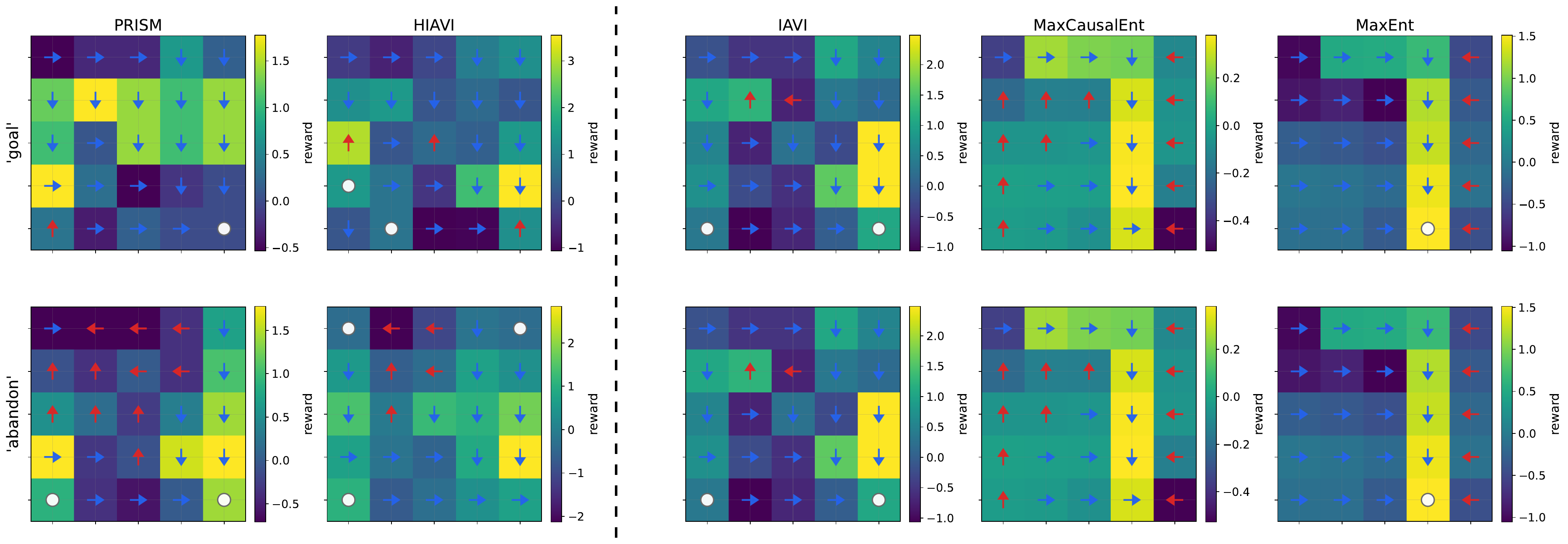}
    \caption{Recovered per-state reward maps and greedy actions on the frustration gridworld.}
    \label{fig:grid_rewardmaps_full}
\end{figure}

\clearpage
\section{Default Hyperparameters}
\label{app:hparams}

\begin{table}
\centering
\small
\caption{Default hyperparameters for the labyrinth and Bridge~V2 experiments.}
\label{tab:hparams}
\setlength{\tabcolsep}{4pt}
\begin{tabular}{llccc}
\toprule
\textbf{Category} & \textbf{Hyperparameter} & \textbf{Symbol} & \textbf{Labyrinth} & \textbf{Bridge V2} \\
\midrule
\multirow{3}{*}{MDP}
    & Num.\ states     & $|\mathcal{S}|$ & 127     & 2048     \\
    & Num.\ actions    & $|\mathcal{A}|$ & 4       & 32      \\
    & Discount factor      & $\gamma$        & 0.97    & 0.97    \\
\midrule
\multirow{3}{*}{EM}
    & Max EM iterations    &        & 180     & 150     \\
    & Random seed          &        & 42      & 42      \\
\midrule
\multirow{2}{*}{Intention model}
    & Num.\ latent intentions   & $K$    & 3       & 4       \\
    & Architecture         &        & IntentionRNN     & IntentionRNN \\
\midrule
\multirow{5}{*}{Intention net}
    & Embedding dim.\ & $d$    & 128     & 128     \\
    & RNN\,/\,LSTM hidden dim.\ & $d'$   & 128     & 128     \\
    & Num.\ layers     &        & 1       & 1       \\
    & Attn.\ heads (Transformer) &        & 4       & 4       \\
\midrule
\multirow{3}{*}{Optimiser}
    & Learning rate        &        & $10^{-3}$        & $10^{-3}$    \\
    & RNN\,/\,LSTM epochs per M-step    &        & 1       & 1       \\
    & Transformer epochs per M-step    &        & 8       & 8       \\
\midrule
\multirow{3}{*}{Regularisation}
    & Regularisation type  &        & KL + L1 &         \\
    & L1 smoothness weight & $\lambda_{\ell_1}$ & 2.22     & 0.0     \\
    & KL smoothness weight & $\lambda_\text{kl}$ & 1.48     & 0.0     \\
\bottomrule
\end{tabular}
\end{table}

\section{Ablation Experiments}
\label{app:ablation}

\begin{table}
\centering
\begin{tabular}{lcccc}
\toprule
\multirow{2}{*}{$K$}
    & \multicolumn{2}{c}{\textbf{Labyrinth}}
    & \multicolumn{2}{c}{\textbf{Bridge V2}} \\
\cmidrule(lr){2-3} \cmidrule(lr){4-5}
    & Train LL & Test LL & Train LL & Test LL \\
\midrule
$1$ & -0.86801 & -0.87071 & -2.45171 & -2.52187 \\
$2$ & -0.75432 & -0.76024 & -1.74644 & -1.81282 \\
$3$ & -0.63754 & -0.64624 & -1.36307 & -1.43000 \\
$4$ & -0.58141 & -0.58714 & -1.09903 & -1.16584 \\
$5$ & -0.53726 & -0.54573 & -0.90255 & -0.96912 \\
$6$ & -0.51220 & -0.52129 & -0.75780 & -0.82373 \\
\bottomrule
\end{tabular}
\caption{Log-likelihood vs number of latent intentions $K$ (IntentionRNN, 5-fold CV, 3 random seeds).}
\label{tab:num_latents}
\end{table}

\begin{table}
    \centering
    \begin{tabular}{lcccc}
    \toprule
    \multirow{2}{*}{\textbf{Model}}
        & \multicolumn{2}{c}{\textbf{Labyrinth}}
        & \multicolumn{2}{c}{\textbf{Bridge V2}} \\
    \cmidrule(lr){2-3} \cmidrule(lr){4-5}
        & Train LL & Test LL & Train LL & Test LL \\
    \midrule
    IntentionRNN         & -0.63754 & -0.64624 & -1.09903 & -1.16584 \\
    IntentionLSTM        & -0.60993 & -0.62705 & -1.10082 & -1.16839 \\
    IntentionTransformer & -0.86797 & -0.87067 & -1.12025 & -1.18638 \\
    \bottomrule
    \end{tabular}
    \caption{Log-likelihood by intention network architecture ($d{=}128$, single layer, 5-fold CV).}
    \label{tab:model_type}
\end{table}

\clearpage
\section{BridgeData~V2: Additional Intention Assignments}
\label{app:bridge_extra}
\begin{figure}
    \centering
    \includegraphics[width=\linewidth]{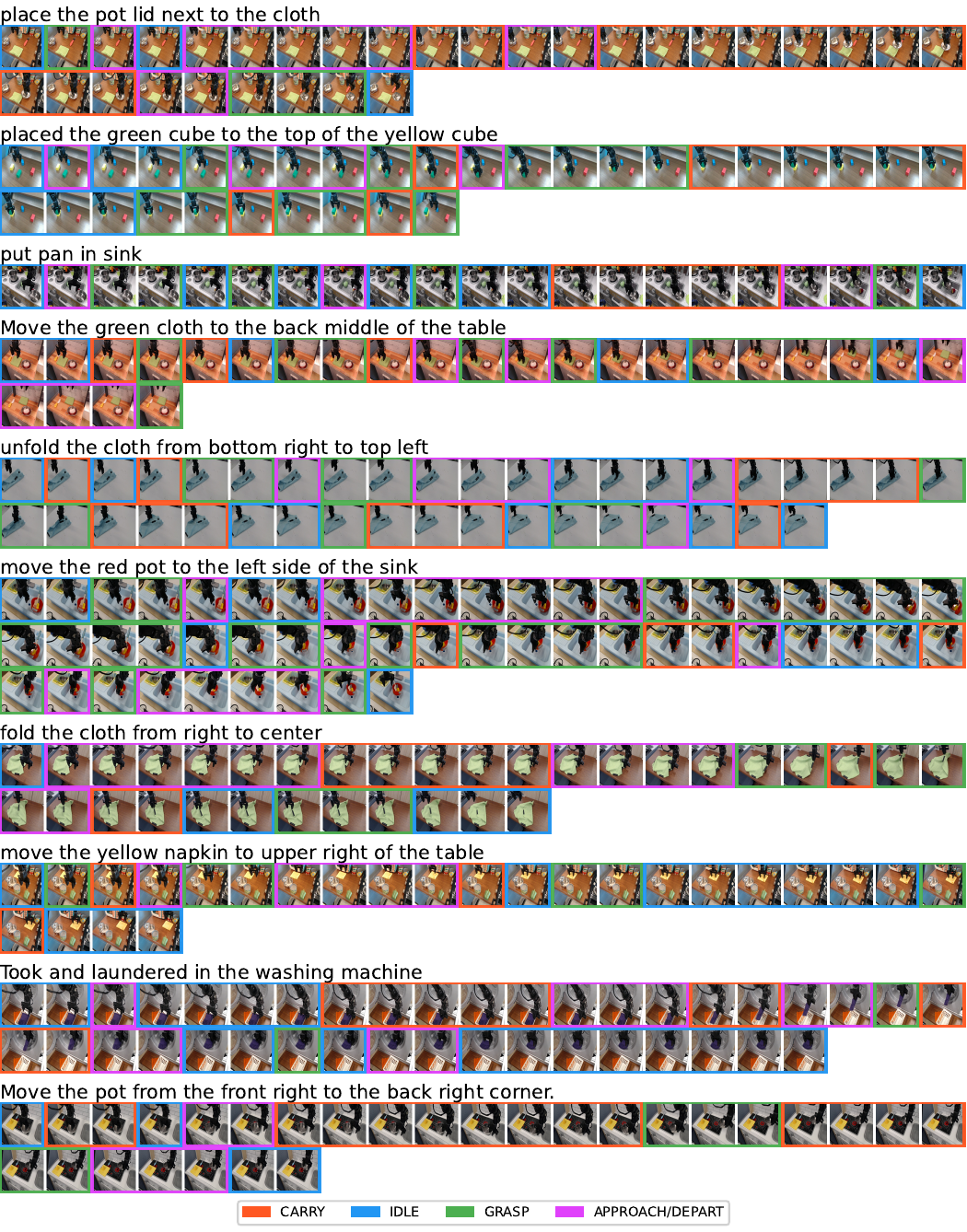}
    \caption{More trajectories of intention assignments.}
    \label{fig:arm_classes_lush_dino}
\end{figure}


\end{document}